\newtheorem{theorem}{Theorem}
\newtheorem{definition}{Definition}
\newtheorem{remark}{Remark}
\newcommand{\PreserveBackslash}[1]{\let\temp=\\#1\let\\=\temp}
\newcolumntype{C}[1]{>{\PreserveBackslash\centering}m{#1}}
\begin{document}

\title[Article Title]{Event-based Reconfiguration Control for Time-varying Formation of Robot Swarms in Narrow Spaces}

\author[1]{\fnm{Duy-Nam} \sur{Bui}\orcidlink{0009-0001-0837-4360}}\email{duynam@ieee.org}

\author[2]{\fnm{Manh Duong} \sur{Phung}\orcidlink{0000-0001-5247-6180}}\email{duong.phung@fulbright.edu.vn}

\author*[1]{\fnm{Hung} \sur{Pham Duy}\orcidlink{0000-0001-7878-9409}}\email{hungpd@vnu.edu.vn}

\affil[1]{\orgdiv{Faculty of Electronics and Telecommunications}, \orgname{VNU University of Engineering and Technology}, \orgaddress{\street{144 Xuan Thuy}, \city{Hanoi}, \postcode{100000}, \country{Vietnam}}}

\affil[2]{\orgdiv{Undergraduate Faculty}, \orgname{ Fulbright University Vietnam}, \orgaddress{\street{105 Ton Dat Tien}, \city{Ho Chi Minh City}, \postcode{700000}, \country{Vietnam}}}

\abstract{This study proposes an event-based reconfiguration control to navigate a robot swarm through challenging environments with narrow passages such as valleys, tunnels, and corridors. The robot swarm is modeled as an undirected graph, where each node represents a robot capable of collecting real-time data on the environment and the states of other robots in the formation. This data serves as the input for the controller to provide dynamic adjustments between the desired and straight-line configurations. The controller incorporates a set of behaviors, designed using artificial potential fields, to meet the requirements of goal-oriented motion, formation maintenance, tailgating, and collision avoidance. The stability of the formation control is guaranteed via the Lyapunov theorem. Simulation and comparison results show that the proposed controller not only successfully navigates the robot swarm through narrow spaces but also outperforms other established methods in key metrics including the success rate, heading order, speed, travel time, and energy efficiency. Software-in-the-loop tests have also been conducted to validate the controller's applicability in practical scenarios. The source code of the controller is available at \url{https://github.com/duynamrcv/erc}.
}

\keywords{multi-robot system, time-varying formation, reconfiguration control, formation control, swarm robots}

\maketitle

\section{Introduction}

With advancements in networked multi-agent technology, multi-robot systems (MRSs) have rapidly developed towards autonomy, offering various applications from warehouse automation to search and rescue operations~\cite{6303906,Verma2021}. A core element of these systems is a formation controller that enables the robots to collaborate in a desired configuration~\cite{Ghaderi2024,1545539,Oh2015}. In rigid formation control, achieving the desired configuration involves setting a specific target distance for each swarm agent. However, with the increasing complexity of formation tasks, the formation configuration needs to be adjustable to meet task requirements. Time-varying formation (TVF) control thus has become essential for swarm robots~\cite{Dong2015,Dong2016}.

\begin{figure}
\begin{subfigure}[b]{0.235\textwidth}
    
    \centering
    \includegraphics[width=\textwidth,frame]{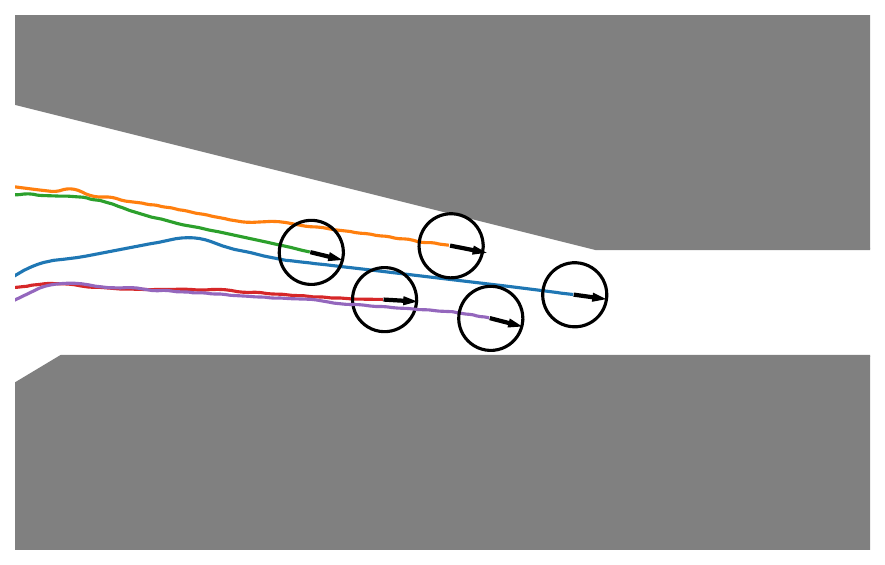}
    \caption{Rigid formation control}
    \label{fig:sample_bc}
\end{subfigure}
\begin{subfigure}[b]{0.235\textwidth}
    \centering
    \includegraphics[width=\textwidth,frame]{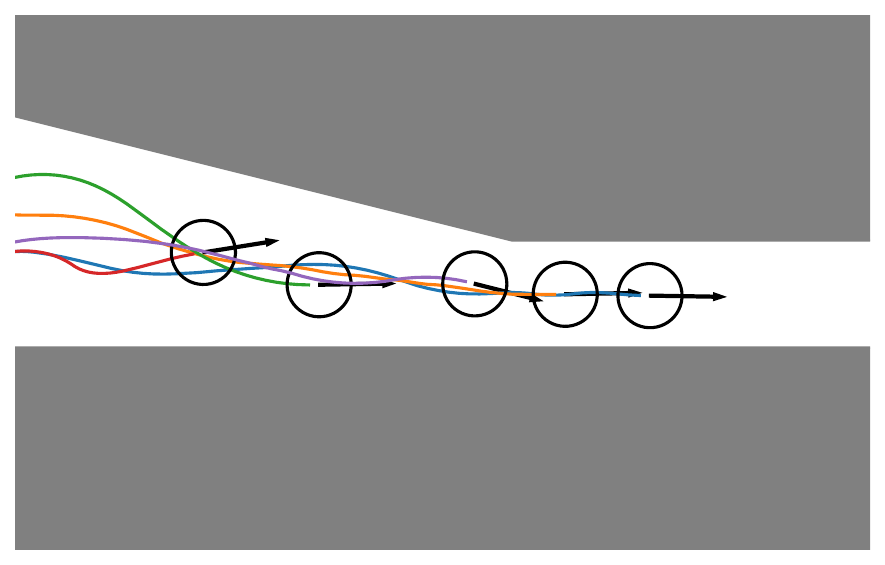}
    \caption{Reconfiguration control}
    \label{fig:sample_erc}
\end{subfigure}
\caption{A robot swarm traveling through a narrow space: (a) Motion paths of the robots using the rigid formation control~\cite{736776, Vsrhelyi2018}, which collide with surrounding obstacles; (b) Motion paths of the robots using our proposed approach, which safely navigate through the narrow space.}
\label{fig:sample}
\end{figure}

Studies in formation control show that biological swarm motion can be characterized by three primary behavioral rules: (i) \textit{cohesion}, which drives an agent toward to its neighbors; (ii) \textit{separation}, which pushes an agent away from its neighbors to prevent collisions; and (iii) \textit{alignment}, which aligns an agent to the dominant heading of its neighbors~\cite{736776,Reynolds1987,Antonelli2009}. For goal-oriented swarm motion, \textit{alignment} can be replaced by a \textit{migration} behavior so that each agent is directed in a desired orientation and moves at a preferred speed~\cite{6095129}. In complex environments, a fourth behavior called \textit{collision avoidance} can be added to guide the agent around obstacles~\cite{9565893, 9990164,1605401,10417519}. These behavioral rules are typically integrated into multi-robot systems through virtual forces within the artificial potential field (APF) to achieve coordinated movements~\cite{9981858,9561902,9990164}. Specifically, the APF has been used in~\cite{10417519,8716301,9565893} to navigate robot swarms through different environments, from open spaces to environments containing concave obstacles. In~\cite{9565893}, a fuzzy controller is included to allow the robots to avoid complex obstacles while maintaining swarm connectivity. In~\cite{Vsrhelyi2018}, evolutionary optimization is combined with a behavioral model to enable stable and decentralized navigation for large-scale aerial robot swarms in confined spaces. However, using a fixed set of behaviors limits the flexibility of the swarm formation in response to abrupt changes in the environment, especially in spaces such as caves, corridors, and tunnels, and hence increases the risk of collisions~\cite{Saska2020,AlonsoMora2017}.

Apart from the behavior-based approach, studies in formation control can be further grouped into two primary categories: \textit{rigid formation}~\cite{Saska2020,Gmez2013,Roy2018,Ebel2017,Ebrahimi2024} and \textit{adaptive formation}~\cite{Fu2020,8843165,AlonsoMora2017,1729881419862733}. In \textit{rigid formation}, the swarm shrinks or expands in size while preserving its overall shape. In~\cite{Saska2020}, model predictive control is used to generate optimal control signals for a rigid formation based on a shared map. In~\cite{Roy2018}, obstacle avoidance is included in a rigid formation through a region-based hierarchical controller that drives the robots to cohesively navigate around obstacles. Graph-based path planning and distributed model predictive control are introduced in~\cite{Ebel2017} for optimal motion when collaborating in a certain formation. While rigid formations are effective in open-space environments, they are unable to guide the robots through narrow spaces due to formation width constraints. In addition, shrinking the formation also increases the risk of collision among the robots, as illustrated in Figure~\ref{fig:sample_bc}.

In \textit{adaptive formation}, the swarm can transform among different configurations in adaptation to the environmental conditions. It is conducted by a two-layer hierarchical reconfiguration strategy that combines a virtual formation structure and behavioral control as in \cite{Fu2020}. In \cite{8843165}, particle swarm optimization is used to optimize the trajectories of each robot of the swarm so that the formation can be reconfigured for obstacle avoidance and task accomplishment. A robust adaptive formation control algorithm is also introduced in \cite{1729881419862733} in which the controller can steer the vehicles to form different shape patterns for flexible navigation. In general, adaptive formation methods are effective in guiding robots through complex environments as they allow for flexible reconfiguration in response to changing conditions. However, most existing systems rely on centralized control, and/or require communication across the entire swarm during the decision-making process. This highlights the need for a distributed formation control system with a simple but effective decision-making law that ensures safe, resilient, and efficient swarm navigation in challenging environments.

In this work, we propose an event-based reconfiguration controller (ERC) for the navigation of a decentralized robot swarm in narrow environments. The robots are equipped with local sensors and communication modules to collect information about the environment and other robots in the formation. The main contributions of this study are threefold:
\begin{enumerate}
    \item Introduce a new set of individual behaviors that meet the reconfiguration control requirements, including goal-directed motion, formation maintenance, tailgating, and collision avoidance behaviors. The behaviors are designed based on an artificial potential field making them simple to implement.
    \item Propose an event-based reconfiguration controller with two modes, \textit{``formation''} and \textit{``tailgating''}, capable of adapting the formation shape in response to environmental changes. The stability of the controller is proven using the Lyapunov theorem.
    \item Extensive simulations and comparisons have been conducted to evaluate the robustness, scalability, and effectiveness of the proposed controller. Software-in-the-loop tests have also been conducted to verify its practical applicability. The source code of the proposed controller is publicly available for further study and practical implementation.
\end{enumerate}

The remaining sections of this paper are organized as follows. Section~\ref{sec2} presents the formation model and formulation. Section~\ref{sec3} introduces the proposed event-based reconfiguration control method. Simulation, comparison, and software-in-the-loop experimental results are shown in Section~\ref{sec4}. The paper ends with conclusions drawn in Section~\ref{sec5}.
\section{Preliminaries}\label{sec2}
\subsection{Robot Model}
Consider a swarm $\mathcal{N}$ that contains $n$ robots labelled $i\in\left\{1,...,n\right\}$. The swarm is modeled as an undirected graph~\cite{10520245,Oh2011} $\mathcal{G}=\left(\mathcal{V},\mathcal{E}\right)$, where vertex set $\mathcal{V} = \left\{1,..., n\right\}$ represents the robots, and edge set $\mathcal{E}=\mathcal{V}\times \mathcal{V}$ contains robot pairs $\left(i, j\right)\in\mathcal{E}$ in which robot $i$ can communicate to robot $j$.

\begin{figure}
    \centering
    \includegraphics[width=0.35\textwidth]{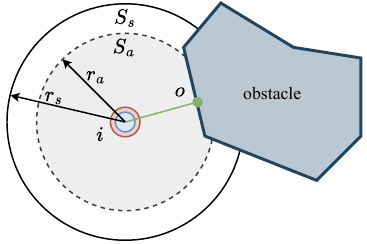}
    \caption{Illustration of a robot in the swarm having a local range sensor with sensing area $S_s$ of radius $r_s$ (solid white circle), alert area $S_a$ of radius $r_a$ (dashed gray circle), and set~$\mathcal{M}_i=\{o\}$ of the nearest data point (green dot) to an obstacle.}
    \label{fig:model}
\end{figure}

\begin{figure*}
    \centering
    \includegraphics[width=0.9\textwidth]{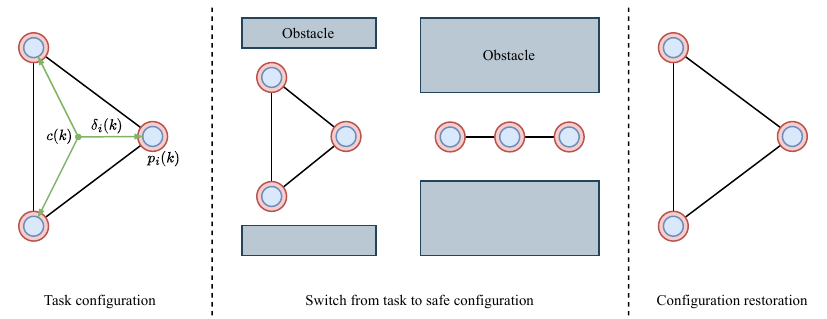}
    \caption{The proposed time-varying formation switching between the task and safe configurations to navigate through a narrow space}
    \label{fig:problem}
\end{figure*}

In this work, robots in the swarm are identical, each with a body radius $r$. Each robot is equipped with an inertial measurement unit (IMU) for position and orientation determination, a range sensor for obstacle detection, and a wireless ad-hoc network module for peer-to-peer communication with other robots. The communication delay between any two robots is assumed to be negligible~\cite{AlonsoMora2018,9527169}. The range sensor has $360^\circ$ field of view providing a scanning area $S_s$ of radius $r_s$, as depicted in Figure~\ref{fig:model}. The sensor data collected by robot $i$ at time $t(k)$ is represented as set $\mathcal{M}_i(k)$ of data points $o$ to obstacles, $\mathcal{M}_i(k) = \{o\}$. A safe area of radius $r_a$ is defined so that repulsive forces are activated when robot $i$ senses obstacles or its neighbors within this range.

The robot's dynamics is expressed in discrete time. Let $\tau$ be the time step, and $p_i(k), v_i(k), u_i(k) \in \mathbb{R}^3$  be the position, velocity, and control input of robot $i$ at time $t(k) = k\tau$, respectively. According to~\cite{Dong2015}, each robot can be modeled as a point-mass system with a double integrator as follows:

\begin{equation}
    x_i(k+1)=A_ix_i(k) + B_iu_i(k),
    \label{eqn:model}
\end{equation}
where $A_i=\left[\begin{array}{cc}I_{3\times3} & \tau I_{3\times3}\\0_{3\times3} & I_{3\times3}\end{array}\right]$ and $B_i=\left[\begin{array}{c}0_{3\times3}\\\tau I_{3\times3}\end{array}\right]$ are system matrices, $u_i$ is input acceleration, and $x_i = [p_i,v_i]^T\in\mathbb{R}^6$ is a state vector containing position and velocity. The velocities and accelerations are bounded so that $\left\Vert v_i(k)\right\Vert\leq v_\text{max}$ and $\left\Vert u_i(k)\right\Vert\leq u_\text{max}$.

\subsection{Problem formulation}

This paper addresses the time-varying formation (TVF) control for a swarm of robots in narrow spaces defined as follows~\cite{Dong2015,Dong2016}.

\begin{definition}\label{def_tvf}
Time-varying formation:  Let $\delta(k)=\left[\delta_1(k),...,\delta_n(k)\right]^T$ be a bounded time-varying vector describing the desired formation configurations. The formation is said to achieve a TVF $\delta(k)$ if all robots in the formation satisfy
\begin{equation}
    \lim_{k\to\infty}\left( p_i(k)-\delta_i(k)-c(k)\right)=0,\quad\forall i \in\{1,...,n\},
\end{equation}
where $c(k)$ is the formation center at time $k$.
\end{definition}

\begin{definition}\label{def_pro}
Safe formation: Given the TVF defined in Definition~\ref{def_tvf}, this TVF is said to be safe for any robot $i$, with $i\in\left\{1,...,n\right\}$, in the formation if the following conditions are satisfied:
\begin{enumerate}
    \item Formation configuration
\begin{equation}
    \lim_{k\to\infty}\sum_{j=1,j\neq i}^n{\left\Vert\left(p_i(k)-\delta_i(k)\right) - \left(p_j(k)-\delta_{j}(k)\right)\right\Vert}=0
    \label{eqn:form}
\end{equation}
    \item Safe distance between robots
\begin{equation}
    \left\Vert p_i(k)-p_j(k)\right\Vert > 2r
    \label{eqn:col}
\end{equation}
for all $j\neq i, j\in\left\{1,...,n\right\}$.
    \item Safe distance from obstacles
\begin{equation}
    \left\Vert p_i(k)-o\right\Vert > r
    \label{eqn:obs}
\end{equation}
for all $o\in\mathcal{M}_i(k)$. 
\end{enumerate}
\end{definition}

\begin{remark}
According to Definition~\ref{def_tvf}, the system converges to the desired configuration when~\eqref{eqn:form} is established.
Conditions \eqref{eqn:col}~--~\eqref{eqn:obs} on the other hand ensure collision-free among the robots and between the robots and their surrounding obstacles.
\end{remark}

\subsection{Formation configurations}\label{sec:config}
Formation configurations refer to the shape that the robots form while cooperating and interacting with the environment. This work considers two following primary configurations:
\begin{enumerate}
    \item \textit{Task configuration:} This configuration represents the arrangement that the robot swarm is expected to maintain to accomplish its task. Common task configurations include shapes such as polygons or V-shapes.
    \item \textit{Safe configuration:} This configuration allows the robot to operate in narrow environments where the space width is insufficient for the robots to maintain their original configuration. It is essential for the safe operation of the swarm.
\end{enumerate}
\section{Event-based Reconfiguration Control}\label{sec3}

The proposed controller incorporates an event-triggering mechanism that enables the swarm to adapt its formation shape dynamically in response to environmental conditions for safe navigation. As illustrated in Figure~\ref{fig:problem}, the desired task configuration, denoted as $\delta^*=\left[\delta^*_1(k),...,\delta^*_n(k)\right]^T$, is initially assigned to the TVF. This configuration is adjusted whenever the robots encounter a narrow space. A scaled task configuration $\delta(k)=\kappa\delta^*$, where $\kappa\in\mathbb{R}$ is a scaling coefficient, is applied if the formation can contract. Otherwise, the swarm temporarily transitions to a safe configuration to avoid collisions. Once passing through the narrow space, the swarm restores the original task configuration to resume its operation. 

To implement this event-triggering mechanism, several behaviors are designed based on potential field force so that they can form two different control modes, \textit{``formation''} and \textit{``tailgating''}, as shown in Figure~\ref{fig:control_diagram}. The \textit{``formation''} mode maintains the task configuration, whereas the \textit{``tailgating''} mode transforms the TVF to the safe configuration, as detailed below. 

\begin{figure*}
    \centering
    \includegraphics[width=0.65\textwidth]{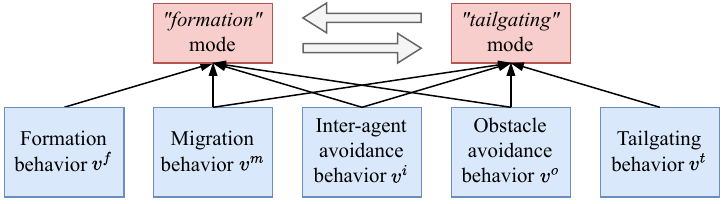}
    \caption{The proposed event-based reconfiguration control with two modes and five individual behaviors}
    \label{fig:control_diagram}
\end{figure*}

\subsection{Behavior design}
Five behaviors are designed for the controller categorized into two groups: conditionally active behaviors and continuously active behaviors. The first group includes formation and tailgating behaviors, which are alternately activated depending on the formation mode. The second group includes migration, inter-agent avoidance, and obstacle avoidance behaviors, which are always activated to maintain the formation behavior.

The formation behavior is designed based on an attractive force that moves the robots toward their desired positions. By utilizing the definition of formation configuration in (\ref{eqn:form}) and noting that $\delta(k)=\kappa\delta^*$, this behavior can be defined as follows:
\begin{equation}
    v^f_i=k_f\sum_{j=1,j\neq i}^n{\left(p_j-p_i-\kappa\left(\delta_j^*-\delta_i^*\right)\right)},
    \label{eqn:uf}
\end{equation}
where $k_f>0$ is the formation control gain.

The tailgating behavior is designed based on the relative positions among the robots in the TVF. It aims to navigate the swarm to pass through the narrow environment. When encountering a narrow environment, robot~$i$ changes its position by aligning with the nearest robot $l_i$ in front of it in the desired direction $u_\text{ref}$. Let $\mathcal{P}_i=\left\{\tilde{p}_{ij}\right\}$, where $\tilde{p}_{ij}=\left\langle (p_j-p_i),u_\text{ref}\right\rangle$, be the dot product of the vector $(p_j-p_i)$ and the desired direction $u_\text{ref}$, with $j\neq i$. The robot $j$ is determined in the front of robot $i$ when $\tilde{p}_{ij}\geq0$.
As a result, the determination of robot $l_i$ is calculated as follows:
\begin{equation}
     l_i=\begin{cases}
    \arg\min_{j}\left\{\tilde{p}_{ij}\in\mathcal{P}_i\vert\tilde{p}_{ij}\geq0\right\} & \exists~\tilde{p}_{ij}\geq0\\ 
    -1 & \text{otherwise}
     \end{cases}
    \label{eqn:li}
\end{equation}

In case the leader robot $l_i$ is determined, i.e., $l_i\neq-1$, robot $i$ will follow its leader $l_i$ and keep the desired distance $d_\text{ref}$. The attractive force toward robot $l_i$ of robot $i$ thus is expressed as follows:
\begin{equation}
    v_i^t= \begin{cases}
        k_t\left(p_{l_i}-p_i-d_\text{ref}\dfrac{v_{l_i}}{\left\Vert v_{l_i}\right\Vert}\right)+v_{l_i} & \text{if } l_i\neq-1\\
        0 & \text{if } l_i=-1
    \end{cases}
\label{eqn:ut}
\end{equation}
where $k_t$ is the tailgating control gain.


The migration behavior aims to navigate the robots toward the goal at a reference velocity. Let $v_\text{ref}$ be the preferred speed and $u_\text{ref}$ be the goal direction. The migration term for each agent is defined as
\begin{equation}
    v_i^m=v_\text{ref}u_\text{ref}.
\end{equation}
When moving through a narrow space, $u_\text{ref}$ can be adjusted to align with the environment's boundary, serving as a virtual goal direction, and then shifted to the actual goal direction once the swarm exits the narrow space~\cite{9565893}.

The inter-agent and obstacle avoidance behaviors are designed to prevent collision. Denote $\mathcal{M}_i$ as the set of points on the obstacle's boundary closest to robot $i$, as illustrated in Figure~\ref{fig:model}. Those repulsive forces are given as follows:
\begin{equation}
    v_i^i=k_{i}\sum_{j=1,j\neq i}^n{v_{ij}^i}
    \label{eqn:ui}
\end{equation}
\begin{equation}
    v_i^o=k_o\sum_{o\in\mathcal{M}_i}v_{io}^o
    \label{eqn:uo}
\end{equation}
where $k_i,k_o>0$ are the inter-agent and obstacle avoidance gains, respectively. Denote $p_{ij}=p_i-p_j$, and $\hat{p}_{ij}=\dfrac{p_{ij}}{\left\Vert p_{ij}\right\Vert}$ as the relative position and the normalized vector between robot $i$ and robot $j$, respectively. Similarly, denote $p_{io}$ and $\hat{p}_{io}$, $o\in\mathcal{M}_i$, as the relative position and the normalized vector between robot $i$ and obstacle $o$, respectively. The associated inter-agent avoidance $v_{ij}^i$ and obstacle avoidance $v_{io}^o$ are given as follows \cite{9143127}:
\begin{equation}
    v_{ij}^{i}=\begin{cases}
\left(\dfrac{1}{\left\Vert p_{ij}\right\Vert }-\dfrac{1}{r_{a}}\right)\dfrac{1}{\left\Vert p_{ij}\right\Vert ^{2}}\hat{p}_{ij}+v_{j} & \text{if }\left\Vert p_{ij}\right\Vert <r_{a}\\
0 & \text{otherwise}
\end{cases}
\end{equation}
\begin{equation}
    v_{io}^{o}=\begin{cases}
\left(\dfrac{1}{\left\Vert p_{io}\right\Vert }-\dfrac{1}{r_{a}}\right)\dfrac{1}{\left\Vert p_{io}\right\Vert ^{2}}\hat{p}_{io} & \text{if }\left\Vert p_{io}\right\Vert <r_{a}\\
0 & \text{otherwise}
\end{cases}
\end{equation}
These equations imply that the avoidance behaviors are only activated when threats are in the alert range $r_a$. The magnitude of the avoidance forces is inversely proportional to the distance between the robot and threats.

\subsection{Event-based reconfiguration control}\label{sec:erc}

The proposed event-based reconfiguration control includes two modes, \textit{``formation''} and \textit{``tailgating''}, as illustrated in Figure~\ref{fig:control_diagram}. The overall velocity applied to each robot is the sum of the velocities contributed by individual behaviors as follows: 
\begin{equation}
    \tilde{v}_i=\begin{cases}
        v_i^f+v_i^m+v_i^i+v_i^o & \text{if mode = \textit{``formation''}}\\
        v_i^t+v_i^m+v_i^i+v_i^o & \text{if mode = \textit{``tailgating''}}
    \end{cases}
    \label{eqn:v}
\end{equation}

\begin{figure}
    \centering
    \includegraphics[width=0.25\textwidth]{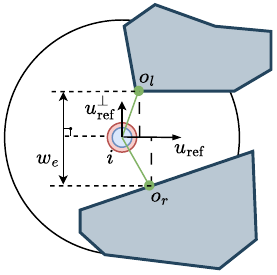}
    \caption{Environment's width estimation}
    \label{fig:we}
\end{figure}

At each time step, a robot determines its mode based on sensing data $\mathcal{M}_{i}$ collected from its local sensor. Each point in $\mathcal{M}_{i}$ is classified into the cluster either on the left or right side of the robot along the $u_\text{ref}$ direction. Let $o_l$ and $o_r$, respectively, be the points on the left and right clusters having a minimum distance to the robot. As depicted in Figure~\ref{fig:we}, the width of the environment is computed as the magnitude of the inner product between $(o_r-o_l)$ and the vector perpendicular to $u_\text{ref}$, $u^\bot_\text{ref}$, as follows:
\begin{equation}
    w_e= \left\vert\left\langle\left(o_r-o_l\right),u^\bot_\text{ref}\right\rangle\right\vert.
    \label{eqn:we}
\end{equation}
The formation's width $w_f$, on the other hand, can be determined through the predefined formation topology. Given the widths of the environment and formation, the scaling factor~$\kappa$ contributing to \eqref{eqn:uf} can be computed as follows:
\begin{equation}
    \kappa = 
    \begin{cases} 
        \dfrac{w_e - 2r}{w_f} & \text{if } w_e \geq \lambda r \\
        0 & \text{otherwise}
    \end{cases}
    \label{eqn:kappa}
\end{equation}

From (\ref{eqn:we}) and (\ref{eqn:kappa}), each robot can automatically determine its mode and compute scaling factor $\kappa$. Let $\lambda>2$ be a pre-defined threshold to switch between two modes. When the environment is too narrow, i.e. $w_e<\lambda r$, the system activates the tailgating mode; otherwise, it operates in the formation mode. The desired velocity $\tilde{v}_i$ corresponding to each mode can be obtained, as presented in Algorithm~\ref{alg:our}.
\begin{algorithm}
\caption{Pseudocode of the ERC}
\label{alg:our}
Get a data set of observed obstacle points $\mathcal{M}_i$\;
\If{$\mathcal{M}_i$ is empty}{
    mode $\leftarrow$ \textit{``formation''}\;
    $\kappa \leftarrow 1.0$\;
}
\Else{
    Determine obstacle points $o_l$, $o_r$\;
    \If{$\nexists o_l$ or $\nexists o_r$}{
        mode $\leftarrow$ \textit{``formation''}\;
        $\kappa \leftarrow 1.0$\;
    }
    \Else{
        Compute space width $w_e$\tcc*[r]{Eq. \ref{eqn:we}}
        \If{$w_e\leq\lambda r$}{
            mode $\leftarrow$ \textit{``tailgating''}\;
        }
        \Else{
            mode $\leftarrow$ \textit{``formation''}\;
            Estimate desired formation width $w_f$\;
            \If{$w_e-2r\leq w_f$}{
                Compute scaling factor $\kappa$\tcc*[r]{Eq. \ref{eqn:kappa}}
            }
            \Else{
                $\kappa\leftarrow1.0$\;
            }
        }
    }
}
Compute desired velocity $\tilde{v}_i$\tcc*[r]{Eq. \ref{eqn:v}}
\Return $\tilde{v}_i$\;
\end{algorithm}

Note from \eqref{eqn:model} that the control signal is acceleration. It can be obtained by differentiating the velocity in the discrete time as follows:
\begin{equation}
    \tilde{u}_i(k+1) =\dfrac{\tilde{v}_i(k+1)-\tilde{v}_i(k)}{\tau}.
\end{equation}
Bounding on the acceleration and velocity can be included as follows:
\begin{equation}
    u_i=\dfrac{\tilde{u}_i}{\left\Vert\tilde{u}_i\right\Vert}\min(\left\Vert\tilde{u}_i\right\Vert, u_\text{max})
\end{equation}
\begin{equation}
    v_i=\dfrac{\tilde{v}_i}{\left\Vert\tilde{v}_i\right\Vert}\min(\left\Vert\tilde{v}_i\right\Vert, v_\text{max})
\end{equation}

\subsection{Stability analysis}

\begin{theorem}\label{the:stability}
Under the control law given in~\eqref{eqn:v}, the TVF described in~\eqref{eqn:model} asymptotically converges to the desired task configuration.
\end{theorem}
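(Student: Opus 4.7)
My plan is to build a Lyapunov function from the formation tracking error, show that its discrete-time difference is non-positive along closed-loop trajectories, and then apply the discrete-time LaSalle invariance principle. Since the theorem targets convergence to the desired task configuration, I would concentrate on the \emph{``formation''} mode (active whenever $w_e\geq\lambda r$) with $\kappa=1$, and treat the stationary regime in which every obstacle lies outside the alert radius $r_a$ so the avoidance terms $v_i^{i}$ and $v_i^{o}$ vanish.

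First, introduce the formation error $e_i(k)=p_i(k)-\delta_i(k)-c(k)$ and the candidate
\begin{equation}
V(k)=\tfrac{1}{2}\sum_{i=1}^{n}\|e_i(k)\|^{2},
\end{equation}
which is positive definite in $e$ and vanishes exactly on the TVF set of Definition~\ref{def_tvf}. A direct rewrite of~\eqref{eqn:uf} gives $v_i^{f}=-k_f\sum_{j\neq i}(e_i-e_j)$, i.e.\ the negative gradient of a Laplacian-type potential on the complete communication graph $\mathcal{G}$; the migration term $v_i^{m}=v_\text{ref}u_\text{ref}$ is common to all robots, so it only transports the formation center $c(k)$ and cancels out of the relative-error dynamics.

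Second, I would substitute the discrete model~\eqref{eqn:model} with $\tilde v_i=v_i^{f}+v_i^{m}$ into $e_i(k+1)$ and expand
\begin{equation}
\Delta V = V(k+1)-V(k) = -\tau k_f\,\mathbf{e}^{\top}(L\otimes I_3)\mathbf{e} + O(\tau^{2}),
\end{equation}
where $\mathbf{e}$ stacks the $e_i$ and $L$ is the Laplacian of $\mathcal{G}$. The first-order term is non-positive because $L$ is positive semidefinite with null space spanned by the all-ones vector; for sufficiently small $\tau$ (consistent with the bounds $v_\text{max}$, $u_\text{max}$) the quadratic remainder is dominated, yielding $\Delta V\leq 0$. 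The discrete LaSalle principle then drives trajectories into the largest invariant set inside $\{\Delta V=0\}$, namely the consensus set $\{e_1=\cdots=e_n\}$; absorbing the common residual offset into $c(k)$ (which is only defined up to such a shift in Definition~\ref{def_tvf}) recovers the desired TVF condition~\eqref{eqn:form}.

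The main obstacle will be rigorously handling the hybrid, event-triggered structure: the avoidance forces~\eqref{eqn:ui}--\eqref{eqn:uo} and the tailgating contribution~\eqref{eqn:ut} act as switched perturbations that are nonzero only while an obstacle lies inside $r_a$ or the swarm is inside a narrow passage. I would argue that these regions are traversed in finite time and have bounded spatial extent, so the perturbations are bounded with finite cumulative effect; a cleaner alternative is to establish input-to-state stability of the formation-error dynamics with respect to the avoidance and tailgating forces as bounded inputs, and then exploit their eventual vanishing to conclude asymptotic convergence to the desired task configuration.
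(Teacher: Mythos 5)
Your core argument---rewrite the formation term as a Laplacian consensus law, exhibit a quadratic Lyapunov function whose decrease follows from positive semidefiniteness of $L$, and invoke LaSalle to land on the consensus set---is the same engine the paper uses, but you reach it by a genuinely different route. The paper works in absolute position coordinates $P$, keeps the offset as a bias vector $B$ and uses the algebraic fact that $B$ is an eigenvector of the complete-graph Laplacian (eigenvalue $n$) to center its Lyapunov function at $P-\tfrac{1}{k_f n}B$; you instead pass to the error coordinates $e_i=p_i-\delta_i-c$, which absorbs the bias and the common migration term at once and is cleaner. Two further differences are worth noting. First, you do the analysis in genuine discrete time, bounding $\Delta V=-\tau k_f\,\mathbf{e}^{\top}(L\otimes I_3)\mathbf{e}+O(\tau^2)$ and asking for $\tau$ small enough to dominate the remainder; the paper silently switches to continuous-time derivatives $\dot V_f$ even though the plant \eqref{eqn:model} is discrete, so your version is more faithful to the stated model. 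Second, you treat the avoidance and tailgating terms as switched perturbations and propose an ISS argument, whereas the paper assigns each behavior its own Lyapunov function and simply adds the functions and their derivatives per mode; that additive step is delicate, because each $\dot V$ there is computed as if only the corresponding behavior were driving the robot, while the actual velocity is the sum of all active behaviors, so the cross-terms are never accounted for. Your ISS framing confronts that coupling honestly. The one place your plan leans on something unproven is the claim that the obstacle-alert and narrow-passage regimes are traversed in finite time so the perturbations eventually vanish: nothing in the control law guarantees this a priori (it is essentially the success of the navigation task itself), so you would either have to assume it explicitly or prove a separate forward-progress property; the paper does not establish it either, but its proof structure hides the dependence rather than exposing it as yours does.
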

\begin{proof}
To prove Theorem~\ref{the:stability}, we conduct stability analysis of the TVF for each behavior, except the migration behavior $v_i^m$ due to its constant impact. 

\textit{Formation behavior:} Rewrite control law \eqref{eqn:uf} for the formation behavior as follows:
\begin{equation}
    v^f_i = k_f\sum_{j=1}^n{\left(p_j-p_i-\kappa \left(\delta_j^*-\delta_i^*\right)\right)}=k_f\sum_{j=1}^n{\left(p_j-p_i\right)}+b_i,
    \label{eq:dynamics}
\end{equation}
where $b_i=-k_f\sum_{j=1}^n\kappa\left(\delta_j^*-\delta_i^*\right)$ is the bias term.  Denote $H$ as the Laplacian matrix of the undirected sensing graph $\mathcal{G}$ of the swarm. As $\mathcal{G}$ is fully connected, element $h_{ij}$ of $H$ is given as follows:
\begin{equation}
    h_{ij}=\begin{cases}
    -1 & \text{if }i\neq j \\
    n-1 & \text{if }i=j
    \end{cases}
\end{equation}
With these values of its elements, the Laplacian matrix $H$ has one zero eigenvalue, while all other eigenvalues are identical and positive. Additionally, the bias vector $B=[b_1,...,b_n]^T$ is also an eigenvector of $H$ with corresponding eigenvalue $n$ so that
\begin{equation}
    HB=nB
\end{equation}

The swarm dynamics in (\ref{eq:dynamics}) then can be expressed as follows:
\begin{equation}
    \dot{P}=-k_fHP+B,
\end{equation}
where $P=\left[p_1,...,p_n\right]^T$. 

Define the Lyapunov-like function for the TVF system as follows:
\begin{equation}
    V_f=\dfrac{1}{2}\left(P-\dfrac{1}{k_fn}B\right)^T\left(P-\dfrac{1}{k_fn}B\right).
\end{equation}

Taking the first derivative of $V_f$ gives
\begin{equation}
\begin{aligned}
    \dot{V}_f&=\left(P-\dfrac{1}{k_fn}B\right)^T\left(-k_fHP+B\right)\\
    &=-k_f\left(P-\dfrac{1}{k_fn}B\right)^TH\left(P-\dfrac{1}{k_fn}B\right)\leq0.
\end{aligned}
\end{equation}

According to the LaSalle invariance
principle~\cite{9247488}, state $P$ will converge to the largest invariant subset $\Omega=\left\{P|\dot{V}_f=0\right\}$ as $k\to\infty$, which means the formation reaches the desired shape.

\textit{Tailgating behavior:} Let $l_i$ be the leader of robot $i$. Define a candidate Lyapunov function as follows:
\begin{equation}
    V_{t}=\dfrac{1}{2}\left(p_{l_i}-p_{i}-d_\text{ref}\dfrac{v_{l_i}}{\left\Vert v_{l_i}\right\Vert}\right)^{T}\left(p_{l_i}-p_{i}-d_\text{ref}\dfrac{v_{l_i}}{\left\Vert v_{l_i}\right\Vert}\right)
\end{equation}

Taking the first derivative of $V_t$ gives
\begin{equation}
    \dot{V}_{t}=\left(p_{l_i}-p_{i}-d_\text{ref}\dfrac{v_{l_i}}{\left\Vert v_{l_i}\right\Vert}\right)^{T}\left(v_{l_i}-v_{i}\right).
\end{equation}

By using the control law $v^t_i$ in \eqref{eqn:ut}, $\dot{V}_t$ becomes
\begin{equation}
    \dot{V}_{t}=-k_{t}\left(p_{l_i}-p_{i}-d_\text{ref}\dfrac{v_{l_i}}{\left\Vert v_{l_i}\right\Vert}\right)^{T}\left(p_{l_i}-p_{i}-d_\text{ref}\dfrac{v_{l_i}}{\left\Vert v_{l_i}\right\Vert}\right)\leq0.
    \label{eq:lyapunov}
\end{equation}
The Lyapunov stability condition is satisfied in (\ref{eq:lyapunov}). It indicates that the position $p_i$ of robot $i$ will align with its leader $p_{l_i}$ along the leader's direction $v_{l_i}$ and keep a distance $d_\text{ref}$ with it. This guarantees the straight line configuration.

\textit{Collision behaviors:} The collision avoidance behaviors activated when $\left\Vert p_{ij}\right\Vert<r_{a}$ for agents and $\left\Vert p_{io}\right\Vert<r_{a}$ for obstacles. The candidate Lyapunov functions for the robots within the alert range from those threats are defined as follows:
\begin{equation}
    V_i=\dfrac{1}{2}\sum_{j=1,j\neq i}^n\dfrac{1}{\left\Vert p_{ij}\right\Vert^2}
\end{equation}
\begin{equation}
    V_o=\dfrac{1}{2}\sum_{o\in\mathcal{M}_i}\dfrac{1}{\left\Vert p_{io}\right\Vert^2}
\end{equation}

Taking the first derivative of $V_i$ and $V_o$ gives
\begin{equation}
    \dot{V}_i=-\sum_{j=1,j\neq i}^n\dfrac{p_{ij}^T}{\left\Vert p_{ij}\right\Vert^4}(v_i-v_j)
\end{equation}
\begin{equation}
    \dot{V}_o=-\sum_{o\in\mathcal{M}_i}\dfrac{p_{io}^T}{\left\Vert p_{io}\right\Vert^4}v_i
\end{equation}

Substituting control laws $v^i_{ij}$ and $v^o_{io}$ in~\eqref{eqn:ui} -- \eqref{eqn:uo} to $\dot{V}_i$ and $\dot{V}_o$ gives
\begin{equation}
    \dot{V}_i=-k_i\sum_{j=1,j\neq i}^n\left(\dfrac{1}{\left\Vert p_{ij}\right\Vert }-\dfrac{1}{r_{a}}\right)\dfrac{1}{\left\Vert p_{ij}\right\Vert ^{5}}
\end{equation}
\begin{equation}
    \dot{V}_o=-k_o\sum_{o\in\mathcal{M}_i}\left(\dfrac{1}{\left\Vert p_{io}\right\Vert }-\dfrac{1}{r_{a}}\right)\dfrac{1}{\left\Vert p_{io}\right\Vert ^{5}}
\end{equation}
Since $\left\Vert p_{ij}\right\Vert<r_{a}$ and $\left\Vert p_{io}\right\Vert<r_{a}$, $\dot{V}_i <0$ and $\dot{V}_o<0$. The Lyapunov stability conditions for collision avoidance behaviors are satisfied. 

Next, summing the candidate Lyapunov functions in two separate modes, \textit{``formation''} and \textit{``tailgating''}, gives
\begin{equation}
\begin{aligned}
    V_F &= V_f+V_i+V_o\geq0\\
    V_T &= V_t+V_i+V_o\geq0
\end{aligned}
\end{equation}
and 
\begin{equation}
\begin{aligned}
    \dot{V}_F&=\dot{V}_f+\dot{V}_i+\dot{V}_o\leq0\\
    \dot{V}_T&=\dot{V}_t+\dot{V}_i+\dot{V}_o\leq0
\end{aligned}
\end{equation}
As a result, the system under control law~\eqref{eqn:v} in two separated modes, \textit{``formation''} and \textit{``tailgating''}, is asymptotically stable.
\end{proof}

\begin{remark}
The control architecture for the TVF in this work is decentralized. Each robot in the formation decides the mode itself based on information collected from its local sensors. The event-triggering condition in Algorithm~\ref{alg:our} is also distributed in the form of compact sets. 
\end{remark}

\begin{remark}
According to Lyapunov's theory, control law \eqref{eqn:v} ensures the system's asymptotic stability, such that under its action, the TVF will asymptotically achieve the desired configuration in both modes.
\end{remark}
\section{Results}\label{sec4}
A number of simulations and comparisons have been conducted to evaluate the performance of the proposed controller with details as follows.

\subsection{Evaluation setup}
\begin{figure*}
\begin{subfigure}[b]{\textwidth}
    
    \centering
    \includegraphics[width=0.9\textwidth]{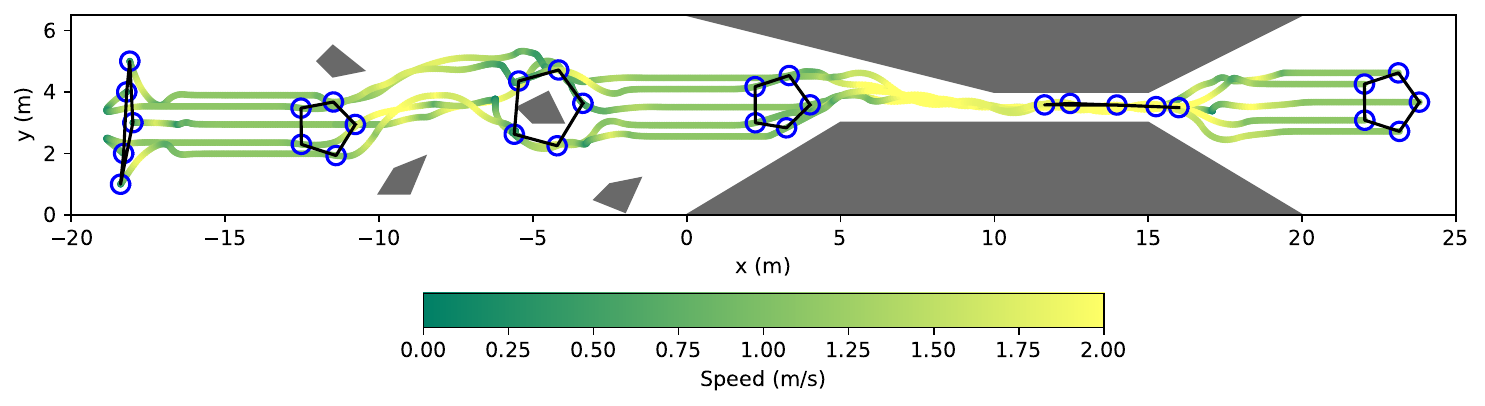}
    \caption{Pentagon configuration}
    \label{fig:path_erc1}
\end{subfigure}
\begin{subfigure}[b]{\textwidth}
    \centering
    \includegraphics[width=0.9\textwidth]{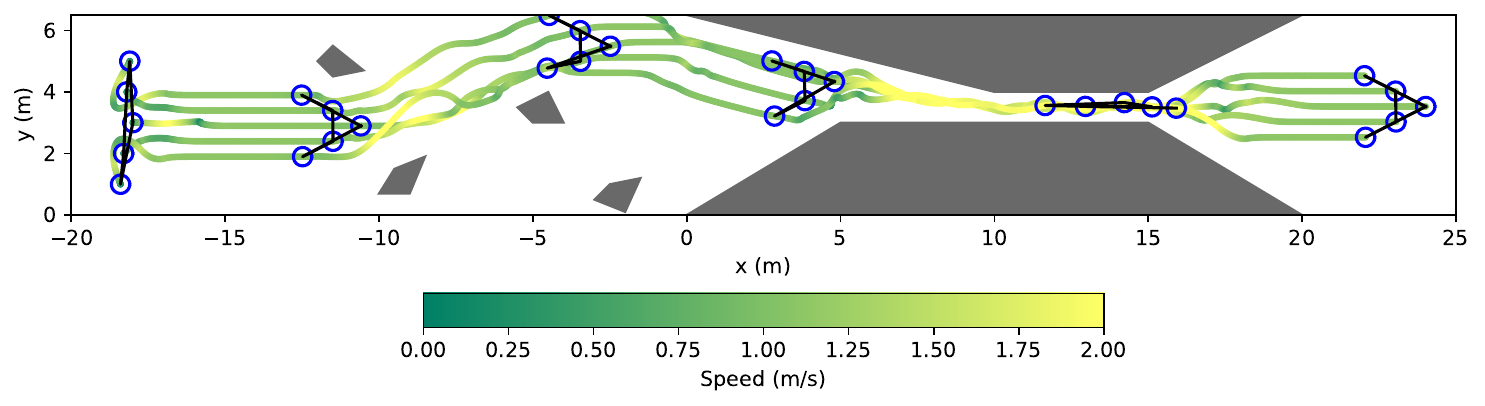}
    \caption{V-shape configuration}
    \label{fig:path_erc2}
\end{subfigure}
\caption{Motion paths and velocity profiles of the proposed ERC in the pentagon and V-shape configurations.}
\label{fig:path}
\end{figure*}

\begin{figure*}[!]
\begin{subfigure}[b]{0.49\textwidth}
    
    \centering
    \includegraphics[width=\linewidth]{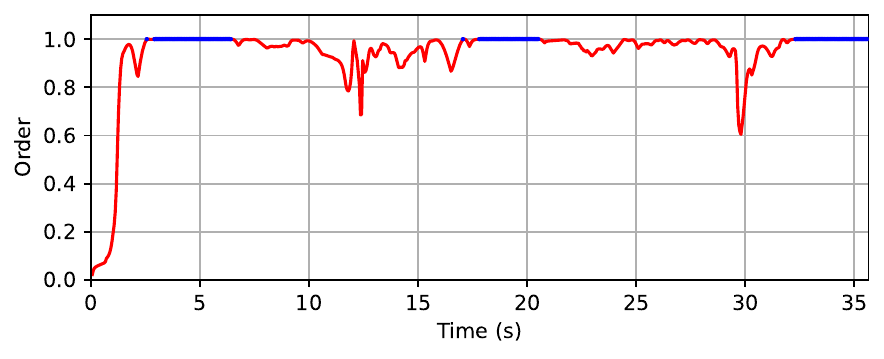}
    \caption{Pentagon configuration}
    \label{fig:order_erc1}
\end{subfigure}
\begin{subfigure}[b]{0.49\textwidth}
    \centering
    \includegraphics[width=\linewidth]{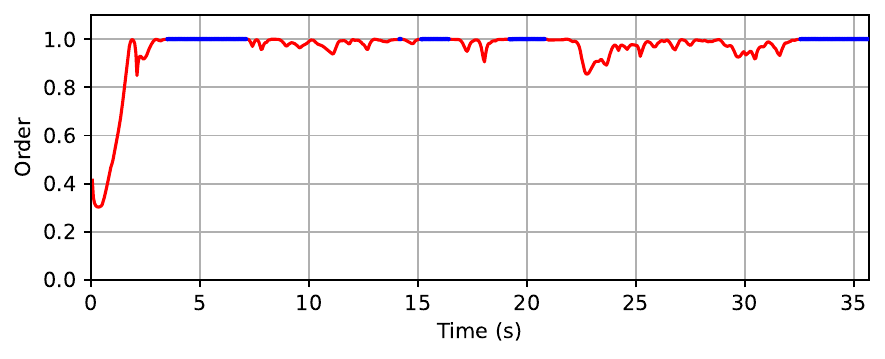}
    \caption{V-shape configuration}
    \label{fig:order_erc2}
\end{subfigure}
\caption{Order values of the proposed ERC}
\label{fig:order}
\end{figure*}

\begin{figure*}[!]
\begin{subfigure}[b]{0.49\textwidth}
    
    \centering
    \includegraphics[width=\linewidth]{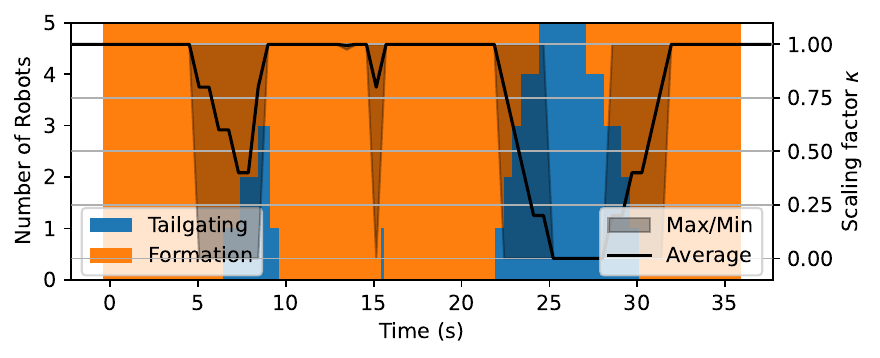}
    \caption{Pentagon configuration}
    \label{fig:mode_erc1}
\end{subfigure}
\begin{subfigure}[b]{0.49\textwidth}
    \centering
    \includegraphics[width=\linewidth]{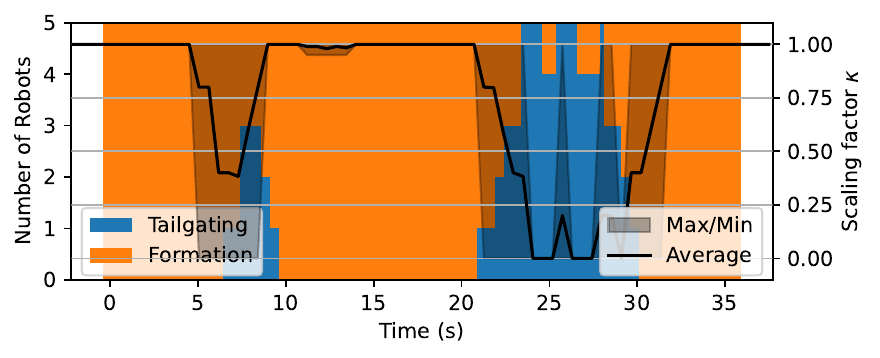}
    \caption{V-shape configuration}
    \label{fig:mode_erc2}
\end{subfigure}
\caption{Correlation between the robot number in each mode (bar chart) and the scaling factor (black line) of the proposed ERC}
\label{fig:mode}
\end{figure*}

\begin{table*}[!]
\caption{Comparison result between the APF, IAPF and ERC}
\label{tbl:com}
\centering
\begin{tabular}{C{1.6cm} C{1.cm} C{1.0cm} C{1.2cm} C{2.2cm} C{1.8cm} C{4.0cm}}
\hline\hline
Configu-ration             & Method & Success & Order $\Phi$  & Mean speed (m/s) & Travel time (s) & Mean energy consumption (m$^2$/s$^3$)  \\ \hline
\multirow{3}{*}{Pentagon} & APF    & 2/10    & 0.9435 & 0.8278           & 40.15           & 7432.3482                                       \\
                          & IAPF   & 6/10    & \textbf{0.9436} & 0.8520           & 39.35           & \textbf{6409.1530}
                          \\
                          & ERC    & \textbf{10/10}   & 0.9409 & \textbf{0.9803}           & \textbf{35.50}           & 7046.7390     \\ \hline
\multirow{3}{*}{V-shape}  & APF    & 3/10    & 0.9594 & 0.8095           & 40.30           & 7867.1044                                       \\
                              & IAPF   & 7/10    & \textbf{0.9597} & 0.8863           & 37.95           & 6317.7419  \\ 
                              & ERC    & \textbf{10/10}   & 0.9596 & \textbf{0.9519}           & \textbf{35.75}           & \textbf{6232.9238}   \\ \hline\hline
\multicolumn{7}{l}{\small *Bold values indicate the best outcomes.} \\
\end{tabular}
\end{table*}

\begin{figure*}[!]
    \centering
    \begin{subfigure}[b]{0.45\textwidth}
    \includegraphics[width=\textwidth]{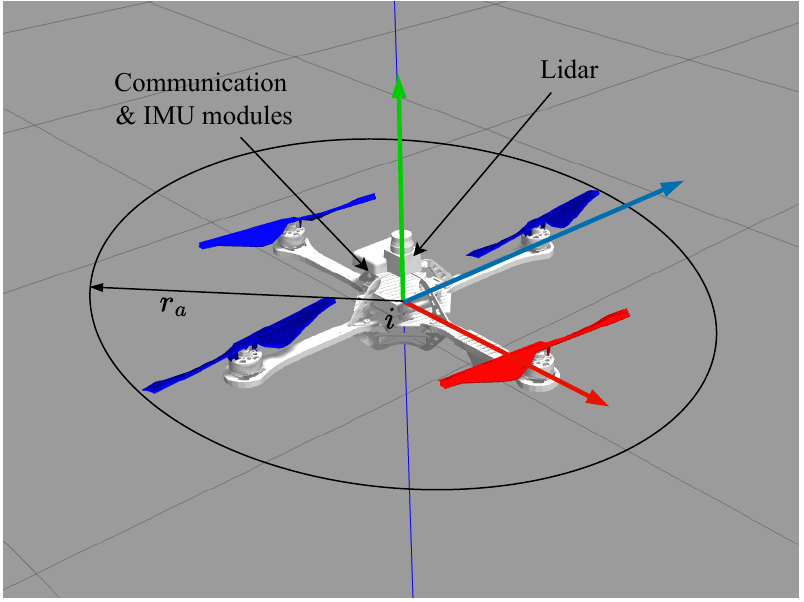}
    \caption{Hummingbird UAV model}
    \label{fig:gazebo_uav}
    \end{subfigure}
    \begin{subfigure}[b]{0.44\textwidth}
    \includegraphics[width=\textwidth,frame]{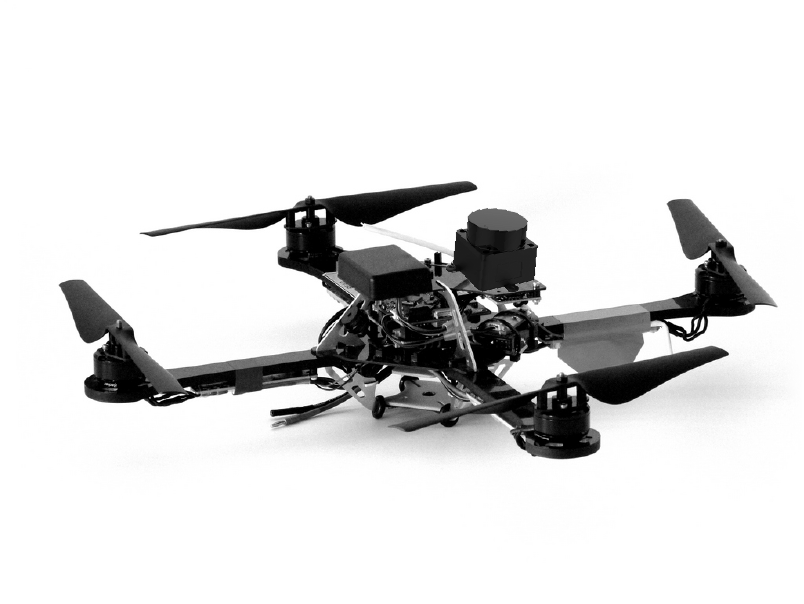}
    \caption{Real Hummingbird UAV \cite{Furrer2016,Bui2022}}
    \label{fig:gazebo_real}
    \end{subfigure}
    \caption{The Hummingbird UAV model used in SIL tests}
    \label{fig:gazebo_setup}
\end{figure*}

\begin{figure*}[!]
    \centering
    \begin{subfigure}[b]{0.495\textwidth}
    \includegraphics[width=\textwidth]{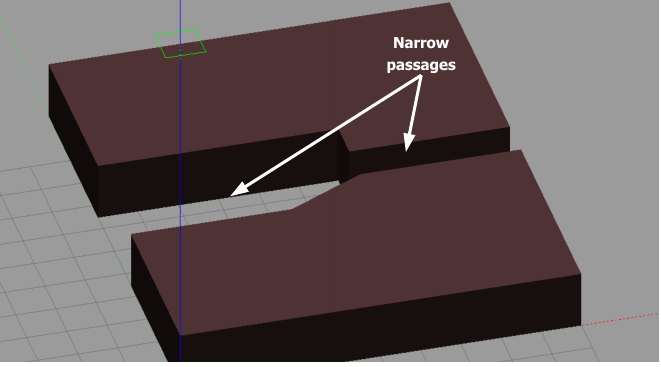}
    \caption{The environment}
    \end{subfigure}
    \begin{subfigure}[b]{0.40\textwidth}
    \includegraphics[width=\textwidth]{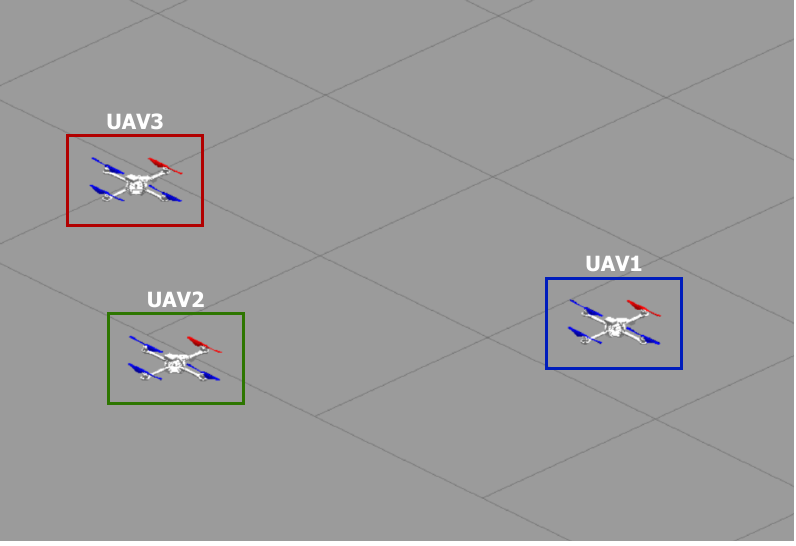}
    \caption{The UAVs at initial random positions}
    \label{fig:gazebo_init}
    \end{subfigure}
    \caption{Environment and UAV setup for SIL tests}
    \label{fig:gazebo_env}
\end{figure*}

The proposed controller is tested in a complex environment consisting of two areas with different types of obstacles. One area represents the forest-like environment with an obstacle density of 0.05 obs/m$^2$, whereas the other represents the cave-like environment with a narrow space of 1~m, as shown in Figure \ref{fig:path}. The TVF starts randomly from the left side of the environment and travels to the right side in the desired direction $u_\text{ref}=\left[1,0,0\right]^T$. The swarm includes 5 identical robots, each having the radius $r=0.3$~m, sensing range $r_s=3.0$~m, safe zone $r_a=3r$, maximum velocity $v_\text{max}=2$~m/s, and maximum acceleration $u_\text{max}=2$~m/s$^2$. The formation includes two task configurations, V-shape and polygon. In the \textit{``tailgating''} mode, the desired distance for a robot to its leader is set to $d_\text{ref}=1$~m. The control period is set at $\tau=0.1$~s. In evaluation, the order metric $\Phi$ is used to measure the correlation of the velocity vectors of the robots within the formation~\cite{9732989} and is computed as
\begin{equation}
    \Phi=\dfrac{1}{n(n-1)}\sum_{i=1}^n\sum_{j=1,j\neq i}^n  {\dfrac{\left\langle v_i,v_j\right\rangle}{\left\Vert v_i\right\Vert\left\Vert v_j\right\Vert}}.
\end{equation}
It has a range from 0 to 1. A value close to 1 means the robots have a high consensus on their direction~\cite{9732989,Vicsek1995}.

\subsection{Formation results}
Figure~\ref{fig:path} shows motion paths and velocity profiles of the TVF when navigating through the environment in the pentagon and V-shape configurations\footnote{Video: {\tt\url{https://youtu.be/XmZQIztNjn8}}}. All robots take off from the left side and then move to form the desired shape. They operate in the \textit{``formation''} mode if there is no narrow space in the area. Once narrow space is detected, the TVF transforms to the \textit{``tailgating''} mode to form a straight-line configuration. When the robots escape the narrow passage, they switch back to the \textit{``formation''} mode, which reforms to the desired configuration.

Figure~\ref{fig:order} shows the order metric of the TVF during the navigation process. In both scenarios, the swarm's order value remains close to 1, indicating a high level of directional consensus among the robots. The time intervals where the order value is less than 1 are highlighted in red color in Figure~\ref{fig:order}. The first segment on the left corresponds to the forming process from its initial state to the desired formation. The second segment in the middle presents the swarm navigating through the forest-like environment with obstacles. The third segment on the right depicts the swarm traversing narrow environments. The results also indicate that after passing through the forest-like and narrow environments, the swarm quickly transforms back to the task configuration, as shown in the blue segments with the order value of 1.


Figure~\ref{fig:mode} describes the correlation between the number of robots in different modes and the scaling factor $\kappa$. When the TVF encounters obstacles, the configuration is adapted based on the sensing data of each robot. As a result, the mode of each robot is different at the same time. The scaling factor $\kappa$ is also different among the robots depending on their position and sensing data. However, $\kappa=0$ when all robots in the TVF are in the \textit{``tailgating''} mode.


\subsection{Comparison results}
In another evaluation, comparisons with two other formation control methods, the pure behavior-based control using artificial potential field (APF)~\cite{Vsrhelyi2018} and the improved artificial potential field (IAPF)~\cite{9143127} are conducted. The evaluation metrics used in comparisons include the success rate, order, mean speed, travel time, and mean energy consumption. Each comparison is conducted over 10 simulations with 5 robots in two different configurations. 

The navigation results of the three methods are shown in Table~\ref{tbl:com}. The order is consistently high, approximating 1.0, for all methods implying that the UAVs are well aligned during operation. The proposed ERC, however, achieves the highest success rate, whereas the AFP and IAPF frequently fail due to collisions with narrow passages. The ERC also achieves the highest mean speed, the shortest travel time, and the lowest energy consumption for the V-shaped configuration. These results confirm the effectiveness of the proposed ERC in challenging scenarios.

\subsection{Validation with software-in-the-loop tests}

\begin{figure*}[!]
    \centering
    \begin{subfigure}[b]{0.48\textwidth}
    \includegraphics[width=\textwidth]{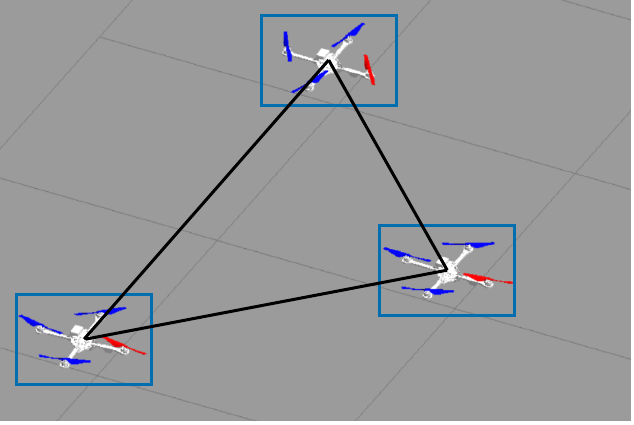}
    \caption{The desired triangular formation}
    \label{fig:gazebo_1}
    \end{subfigure}
    \begin{subfigure}[b]{0.48\textwidth}
    \includegraphics[width=\textwidth]{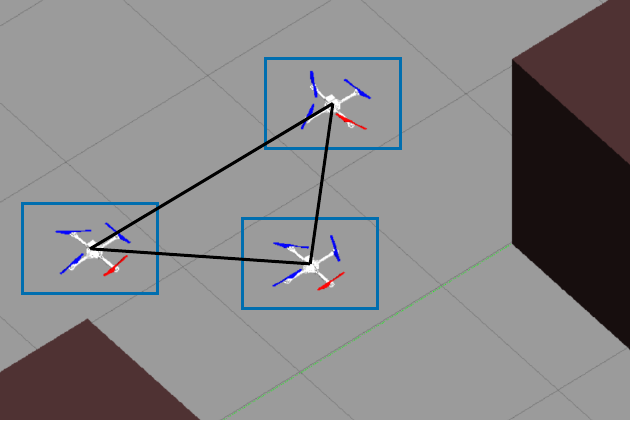}
    \caption{The scaled triangular formation}
    \label{fig:gazebo_2}
    \end{subfigure}
    \begin{subfigure}[b]{0.48\textwidth}
    \includegraphics[width=\textwidth]{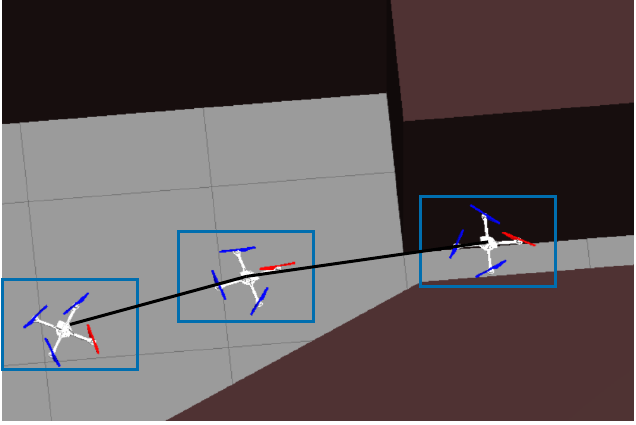}
    \caption{Transformation from the triangular to the line formation}
    \label{fig:gazebo_3}
    \end{subfigure}
    \begin{subfigure}[b]{0.48\textwidth}
    \includegraphics[width=\textwidth]{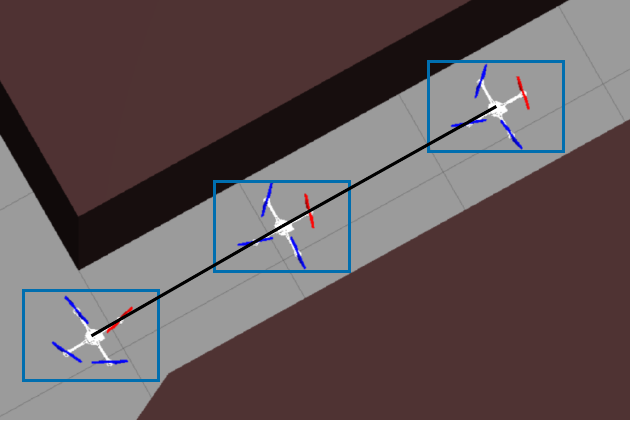}
    \caption{The line formation}
    \label{fig:gazebo_4}
    \end{subfigure}
    \begin{subfigure}[b]{0.48\textwidth}
    \includegraphics[width=\textwidth]{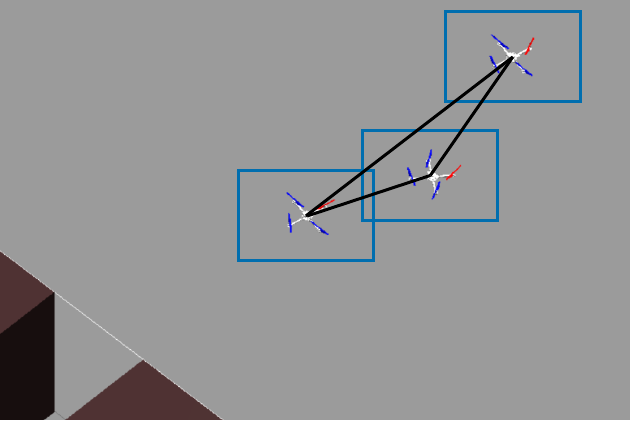}
    \caption{Transforming back to the triangular formation}
    \label{fig:gazebo_5}
    \end{subfigure}
    \begin{subfigure}[b]{0.48\textwidth}
    \includegraphics[width=\textwidth]{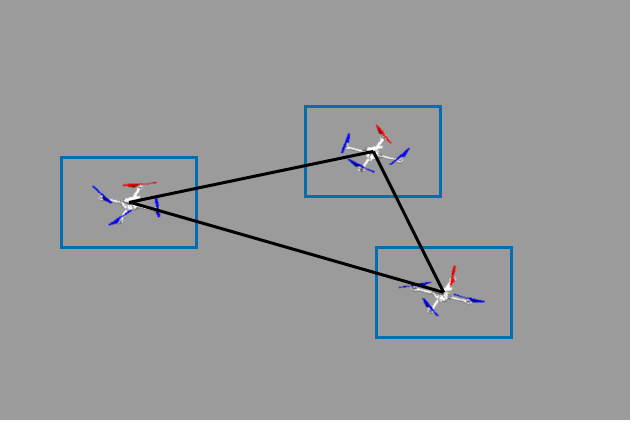}
    \caption{The original triangular formation}
    \label{fig:gazebo_6}
    \end{subfigure}
    \caption{Formation of the UAVs during the SIL test}
    \label{fig:gazebo_result}
\end{figure*}

\begin{figure*}[!]
    \centering
    \includegraphics[width=\textwidth]{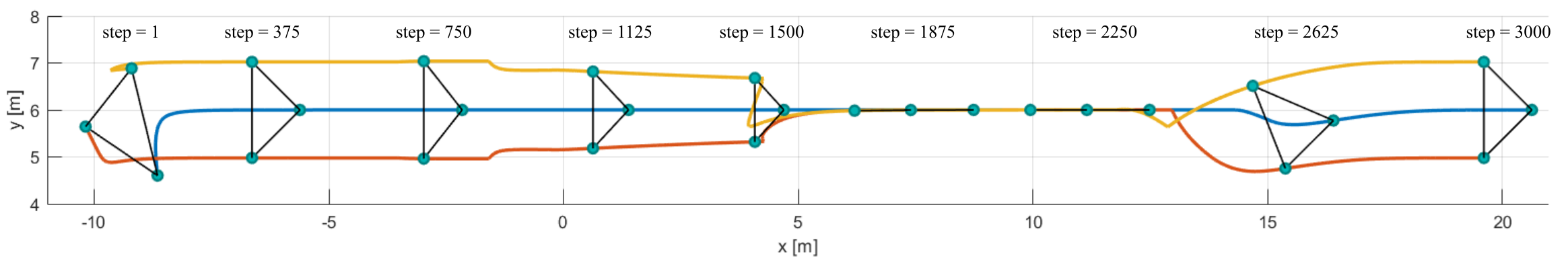}
    \caption{Trajectories of the UAVs during the SIL test}
    \label{fig:gazebo_path}
\end{figure*}

To further evaluate the validity of the proposed controller, software-in-the-loop (SIL) tests have been conducted using a Hummingbird quadrotor developed based on the Gazebo-based RotorS simulator~\cite{Furrer2016}, as described in Figure~\ref{fig:gazebo_setup}. The environment used in the SIL test includes two large obstacles, forming a width-varying tunnel, as depicted in Figure \ref{fig:gazebo_env}. 

Figures~\ref{fig:gazebo_result} and \ref{fig:gazebo_path} show the formation and paths of the UAVs in the SIL test. After taking off, the UAVs quickly form the desired triangular shape, as shown in Figure~\ref{fig:gazebo_1} and at steps 375 - 750 in Figure \ref{fig:gazebo_path}. When detecting the narrow passage, the formation shrinks to safely move through the passage (Figure~\ref{fig:gazebo_2} and steps 1125 - 1500 in Figure \ref{fig:gazebo_path}). When the passage is too narrow for the triangular formation, the UAVs switch to a straight line to pass through the space (Figure~\ref{fig:gazebo_3} - \ref{fig:gazebo_4} and steps 1875 - 2550 in Figure \ref{fig:gazebo_path}). Once passing the passage, the formation transforms back to the origin shape and moves to the target (Figure~\ref{fig:gazebo_5} - \ref{fig:gazebo_6} and steps 2625 - 3000 in Figure \ref{fig:gazebo_path}). The result hence confirms the validity of the proposed ERC in navigating the UAV formation through challenging environments.

\section{Conclusion}\label{sec5}
In this study, we have presented a novel approach called event-based reconfiguration control for managing the time-varying formation of a robot swarm navigating in challenging environments with narrow passages such as valleys, tunnels, and corridors. The controller features five behaviors designed using potential fields that can be combined into two control modes, \textit{formation} and \textit{tailgating}. An event-triggering mechanism utilizing sensing data from local sensors on each robot is introduced to determine the appropriate formation and control mode. This design ensures the system is distributed and scalable. The stability of the control system is validated through the Lyapunov theorem. Evaluation results show that the proposed controller effectively navigates the robot swarm in complex environments, outperforming established methods across most performance metrics. Software-in-the-loop tests further confirm the practicality and robustness of the proposed controller. Our future work will focus on robust reconfiguration formation control that deals with communication delays for complex collaborative tasks in clustered environments.

\section*{Acknowledgement}
Duy Nam Bui was funded by the Master, PhD Scholarship Programme of Vingroup Innovation Foundation (VINIF), code VINIF.2023.Ths.088.

\bibliographystyle{ieeetr}  
\bibliography{ref}

\end{document}